\documentclass[final]{siamltex}
\usepackage{graphicx}
\usepackage{mathtools}
\usepackage{latexsym}
\usepackage[mathscr]{euscript}
\usepackage{xcolor}
\usepackage{algorithm,algorithmic}
\usepackage{fancyhdr}
\usepackage{multirow}
\usepackage{epsfig}
\usepackage{mathptmx}
\usepackage[most]{tcolorbox}
\usepackage{helvet}
\usepackage{courier}
\usepackage{type1cm}
\usepackage{imakeidx}
\usepackage[a4paper,textheight=23cm,textwidth=15cm]{geometry}
\usepackage{hyperref}
\usepackage{enumitem}
\usepackage{titlesec}
\usepackage{amsmath,amssymb,amsfonts}

\usepackage{physics}
\usepackage{tikz}
\usepackage{bm}
\usepackage{booktabs}
\usetikzlibrary{arrows.meta,shapes,positioning,calc,fit,backgrounds,decorations.pathreplacing}
\usepackage{pifont}

\title{
	{Multidimensional Task Learning: A Unified Tensor Framework for Computer Vision Tasks }}
	\author{A. Elichi\thanks{Université du Littoral Cote d'Opale, LMPA, 50 rue F. Buisson, 62228 Calais-Cedex, France.}   \and K. Jbilou\footnotemark[1] .} 

\date{}

\begin{document}
	\maketitle
	
	\begin{abstract}
This paper introduces Multidimensional Task Learning (MTL), a unified mathematical framework based on Generalized Einstein MLPs (GE-MLPs) that operate directly on tensors via the Einstein product. We argue that current computer vision task formulations are inherently constrained by matrix-based thinking: standard architectures rely on matrix-valued weights and vector-valued biases, requiring structural flattening that restricts the space of naturally expressible tasks. GE-MLPs lift this constraint by operating with tensor-valued parameters, enabling explicit control over which dimensions are preserved or contracted without information loss. Through rigorous mathematical derivations, we demonstrate that classification, segmentation, and detection are special cases of MTL, differing only in their dimensional configuration within a formally defined task space $\mathcal{S}_{\mathrm{MTL}}$. We further prove that this task space is strictly larger than what matrix-based formulations can natively express, enabling principled task configurations---such as spatiotemporal or cross-modal predictions---that require destructive flattening under conventional approaches. This work provides a mathematical foundation for understanding, comparing, and designing computer vision tasks through the lens of tensor algebra.
		
	\end{abstract}

	\section{Introduction}
	\label{sec:introduction}
	
{C}{omputer} vision tasks exhibit fundamental differences in their formulations. Image classification assigns a single label per image \cite{krizhevsky2012,he2016}, semantic segmentation produces per-pixel labels \cite{long2015,chen2018}, and object detection predicts multiple structured outputs per spatial region \cite{redmon2016,ren2015}. These differences manifest in separate architectural designs (ResNet vs FCN vs YOLO), distinct loss functions (categorical cross-entropy vs dense pixel-wise losses vs multi-task detection losses), and specialized training procedures.

\noindent We observe that all these tasks operate on tensor-structured data and differ fundamentally in two aspects:
(i) which structural dimensions are preserved during processing, and (ii) how many output modalities are predicted. For instance, classification preserves only the batch dimension, producing $(B, C_{\text{classes}})$ outputs. Segmentation preserves spatial structure, yielding $(B, H, W, C_{\text{classes}})$ outputs. Detection preserves grid structure while predicting three modalities (bounding box, objectness, class), resulting in $(B, G_h, G_w, 4+1+C)$ outputs. \\
\noindent  To address this problem, we introduce Multidimensional Task Learning 
(MTL), a unified mathematical framework based on Generalized Einstein MLPs 
(GE-MLPs) that operate directly on high-dimensional tensors via the Einstein 
product \cite{brazell2013}. GE-MLPs manipulate tensor weights and biases 
instead of traditional matrices and vectors, contracting user-specified axes 
while preserving others, achieving precise control over output dimensions.\\
Within the MTL framework, traditional computer vision tasks emerge as special cases distinguished by their dimensional configurations. This unified approach not only recovers existing task formulations as special 
cases but also reveals unexplored task configurations. By parameterizing tasks 
through dimensional configuration  and introducing the structure 
preservation index $\rho \in [0,1]$.\\
\noindent The main contributions of this paper are:
\begin{itemize}
	\item \textbf{Multidimensional Task Learning (MTL):} A unified mathematical 
	framework parameterizing tasks by tuple $\mathcal{T} = (P, M, \mathcal{L}, \phi)$ 
	where $P$ denotes output dimensions and $M$ preserved structural dimensions, 
	revealing that task differences reduce to dimensional configuration choices.
	
	\item \textbf{Generalized Einstein MLPs (GEMLPs):} Tensor-based 
	architecture operating directly on high-dimensional tensors via Einstein 
	product, 
	eliminating flattening operations while achieving computational complexity 
	identical to specialized architectures.
	
	\item \textbf{Structure Preservation Index:} Introduction of 
	$\rho   \in [0,1]$ quantifying preservation information .

	\item \textbf{Theoretical Unification:} Complete proofs with rigorous 
	mathematical derivations establishing that classification, segmentation, 
	and detection correspond to specific configurations.

\end{itemize}

\noindent This work provides the first formulation-level unification of computer vision 
tasks, demonstrating that apparent architectural differences reduce to choices 
in which dimensions to preserve versus contract during computation. By 
establishing MTL as a foundational framework, we enable both deeper 
understanding of existing tasks and systematic creation of new ones.\\

\noindent \noindent The remainder of this paper is organized as follows. 
Section \ref{background} introduces the mathematical foundations 
including tensor operations and Generalized Einstein MLPs. 
Section \ref{sec:unif} presents the theoretical unification 
with complete proofs that classification, segmentation, and detection 
are special cases of MTL. 
Section \ref{sec:dissc} discusses implications and novel task formulations enabled by the framework. 
Section \ref{sec:conc} concludes the paper.

\section{Methodology: GE-MLP}\label{background}
\noindent In this section, we introduce the mathematical background and notations necessary to understand the Generalized Einstein MLPs (GEMLPs).\\
\noindent  	A multidimensional array of data is called a tensor. We refer to a tensor's number of indices as its order, modes or ways. 
Notice that a scalar is  zero mode tensor, first and second mode tensors are vectors and matrices respectively. 
For a given N-mode (or order-N) tensor $ \mathcal {X}\in \mathbb{R}^{I_{1}\times \ldots \times I_{N}}$, the notation $x_{i_{1},\ldots,i_{N}}$ (with $1\leq i_{j}\leq I_{j}$ and $ j=1,\ldots N $) stands for the element $\left(i_{1},\ldots,i_{N} \right) $ of the tensor $\mathcal {X}$. \\
\noindent In the following, we recall some useful tensor product:
\begin{itemize}
	\item \textbf{Tensor Norm:}
	The norm of a tensor $\mathcal{A}\in \mathbb{R}^{I_1\times \cdots \times I_N}$ is specified by
	\begin{equation}  
		\left\| \mathcal{A} \right\|_F^2 = {\sum\limits_{i_1  = 1}^{I_1 } {\sum\limits_{i_2  = 1}^{I_2 } {\cdots\sum\limits_{i_\ell = 1}^{I_N} {a_{i_1 i_2 \cdots i_\ell }^2 } } } }^{}
		\label{eq:normf}   
	\end{equation}
	where $	\left\|  . \right\|_F$ is the Frobenius norm and $a_{i_1 i_2 \cdots i_\ell} $ represents the entry of $\mathcal{A}$ at the index $(i_1, i_2 \cdots ,i_\ell)$. 
	\item \textbf{Einstein product}\cite{brazell2013}:
	The Einstein product  between two tensors $\mathcal {X}\in \mathbb{R}^{I_{1}\times  \ldots \times I_{N}\times J_{1}\times  \ldots \times J_{M}}$ and $\mathcal {Y}\in \mathbb{R}^{J_{1}\times  \ldots \times J_{M}\times K_{1}\times  \ldots \times K_{L}}$ is a tensor of size ${I_{1}\times  \ldots \times I_{N}\times K_{1}\times  \ldots \times K_{L} }$ whose elements are given by:
	\begin{equation}
		\begin{aligned}
			(\mathcal {X}\ast_M\mathcal {Y})_{i_1,\ldots,i_N,k_1,\ldots,k_L} &=  
			\sum_{j_1,\ldots,j_M}^{J_1,\ldots,J_M} 
			{x}_{i_1,\ldots,i_N,j_1,\ldots,j_M} \\
			&\quad \times {y}_{j_1,\ldots,j_M,k_1,\ldots,k_L}
		\end{aligned}
		\label{eq:einsteinprod}
	\end{equation}
\end{itemize}	 
\subsection{General Description of GEMLP}\label{descrip}
\noindent   The main idea in  Generalized Einstein MLPs is to eliminate the Flatten step.  Let a GE MLP network have \( L \) layers, indexed by \( \ell = 1, \dots, T \). For the \( \ell \)-th layer:

\begin{itemize}
	\item \textbf{Input tensor}: 
	$
	\mathcal{X}^{(\ell)} \in \mathbb{R}^{I_1 \times \dots \times I_N \times J_1 \times \dots \times J_M}
	$
	Contracting dimensions \( I_1, \dots, I_N \) (e.g., features/channels) and preserved dimensions \( J_1, \dots, J_M \) (e.g., spatial positions) ($	\mathcal{X}^{(0)}$ the input initial tensor).
	
	\item \textbf{Weight and Bias tensor}: 
	$ 
	\mathcal{W}^{(\ell)} \in \mathbb{R}^{K_1 \times \dots \times K_P \times I_1 \times \dots \times I_N}
	$
	Transforms contracting dimensions \( I_1, \dots, I_N \) to \( K_1, \dots, K_P \) and  
	$
	\mathcal{B}^{(\ell)} \in \mathbb{R}^{K_1 \times \dots \times K_P \times J_1 \times \dots \times J_M}
	$ (P is flexible and user-selected ).
	
	\item \textbf{Output tensor}: 
	$
	\mathcal{Y}^{(\ell)} \in \mathbb{R}^{K_1 \times \dots \times K_P \times J_1 \times \dots \times J_M}
	$
	Preserves \( J_1, \dots, J_M \).
	
\end{itemize}
\noindent The output at layer \( \ell \) is computed via tensor contraction and activation:
\begin{equation}
	\mathcal{Y}^{(\ell)} = f\left(   \mathcal{W}^{(\ell)} \ast_N \mathcal{X}^{(\ell-1)} + \mathcal{B}^{(\ell)} \right)
\end{equation}

\noindent Explicitly, for each position \( (j_1, \dots, j_M) \) in preserved dimensions:
\begin{equation}
	\begin{aligned}
		y^{(\ell)}_{k_1, \dots, k_P, j_1, \dots, j_M} &= f\Biggl( \underbrace{\sum_{i_1=1}^{I_1} \cdots \sum_{i_N=1}^{I_N}}_{\text{Contraction}} w^{(\ell)}_{k_1, \dots, k_P, i_1, \dots, i_N} \nonumber \\
		&\quad  x^{(\ell-1)}_{i_1, \dots, i_N, j_1, \dots, j_M} + b^{(\ell)}_{k_1, \dots, k_P, j_1, \dots, j_M} \Biggr)
	\end{aligned}\label{eq:outputlayer}
\end{equation}
where $f$ is the activation function.\\
\noindent To adjust the weight  tensor $\mathcal {W} \in \mathbb{R}^{K_1\times \ldots \times K_P\times  I_{1}\times \ldots \times  I_{N}}$ in order to minimize a loss function $L(\mathcal{Y}, \mathcal{Y}_{\text{True}})$, we introduce the Generalized Einstein Gradient Descent (GEGD). This process can be described as follows:
\begin{itemize}
	\item  \textbf{Gradient Computation}: The gradients of the weights and bias are given by:
	\begin{equation}
		\frac{\partial L}{\partial w_{k_1,\ldots,k_P,i_1,\ldots,i_N}} = \frac{\partial L}{\partial y_{k_1,\ldots,k_P,j_1,\ldots,j_M}} \cdot x_{i_1,\ldots,i_N,j_1,\ldots,j_M},
	\end{equation}
	where $x_{i_1,\ldots,i_N,j_1,\ldots,j_M}$ represents the input tensor element corresponding to the weight $w_{k_1,\ldots,k_P,i_1,\ldots,i_N}$.
	\item \textbf{Parameter Updates}:
	The weights and biases are updated using a gradient descent rule. The weight update is given by:
	\begin{equation}
		\begin{aligned}
			w_{k_1,\ldots,k_P,i_1,\ldots,i_N} \leftarrow &w_{k_1,\ldots,k_P,i_1,\ldots,i_N} - \\
			&\gamma \cdot \frac{\partial L}{\partial y_{k_1,\ldots,k_P,j_1,\ldots,j_M}} \cdot x_{i_1,\ldots,i_N,j_1,\ldots,j_M},
		\end{aligned}\label{eq:weight}
	\end{equation}
	and the bias update is given by:
	\begin{equation}
		\begin{aligned}
			b_{k_1,\ldots,k_P,j_1,\ldots,j_M} \leftarrow & b_{k_1,\ldots,k_P,j_1,\ldots,j_M} \\
			& - \gamma \cdot \frac{\partial L}{\partial b_{k_1,\ldots,k_P,j_1,\ldots,j_M}},
		\end{aligned}\label{eq:bias}
	\end{equation}
	where $\gamma > 0$ is the learning rate.
\end{itemize} 

\subsection{Complexity Analysis and Comparison}
The computational complexity of a GE-MLP depends on the dimensions involved in the tensor contractions.  
\begin{itemize}
	\item \textbf{General Complexity}:
	The complexity of a GE-MLP is given by:
	\begin{equation}
		O\left(\prod_{i=1}^N I_i \cdot \prod_{m=1}^M J_m \cdot \prod_{p=1}^P K_p\right),
	\end{equation}\label{eq:complext}
	\item \textbf{Memory  Complexity}
	The memory required for a GE-MLP layer is dominated by the weight and bias tensors:
	\begin{equation}
		O\left(\prod_{p=1}^P K_p \cdot \prod_{n=1}^N I_n + \prod_{p=1}^P K_p \cdot \prod_{m=1}^M J_m\right).
	\end{equation}
	
	\item \textbf{Number of FLOPs}:
	Each tensor contraction involves multiplications and additions. The total number of floating-point operations (FLOPs) is approximately:
	\begin{equation}
		\text{FLOPs} = 2 \cdot \prod_{i=1}^N I_i \cdot \prod_{m=1}^M J_m \cdot \prod_{p=1}^P K_p.
	\end{equation}\label{eq:flops}
\end{itemize}

\subsection{Multidimensional Task Learning (MTL)}\label{subsection:classif}

\noindent Building upon the foundation of GE-MLPs and GEGD, we introduce Multidimensional Task Learning (MTL) , a novel paradigm that generalizes traditional tasks such as classification, segmentation, and detection into higher-dimensional settings while preserving the structural integrity of the data.\\
\noindent MTL leverages the flexibility of GE-MLPs to control contracting $(K_1,\ldots,K_P)$ and preserved dimensions $(J_1,\ldots,J_M)$ , enabling seamless adaptation to complex, structured data.\\
\noindent We now present a rigorous theoretical framework for Multidimensional Task Learning (MTL). We establish formal definitions, theorems, and proofs demonstrating how traditional computer vision tasks emerge as special cases of a unified tensor-based formulation.

\subsection{Foundational Definitions}

\begin{definition}[Task tuple configuration]
	\label{def:task_config}
	A task configuration is a tuple $\mathcal{T} = (P, M, \mathcal{L}, \phi)$ where:
	\begin{itemize}
		\item $P \in \mathbb{N}$: number of output contracting dimensions
		\item $M \in \mathbb{N}$: number of preserved dimensions
		\item $\mathcal{L}$: loss function mapping predictions to scalars
		\item $\phi$: output interpretation function (e.g., argmax, threshold)
	\end{itemize}
\end{definition}

\begin{definition}[Multidimensional Task]
	Given input $\mathcal{X} \in \mathbb{R}^{I_1 \times \cdots \times I_N \times J_1 \times \cdots \times J_M}$ 
	and task configuration $\mathcal{T} = (P, M, \mathcal{L}, \phi)$, 
	a multidimensional task is defined by a function 
	$\mathcal{F}_{\mathcal{T}}: \mathcal{X} \to \mathcal{Y}$ 
	implemented via GE-MLPs, producing structured outputs:
	\begin{equation}
		\mathcal{Y} = \mathcal{F}_{\mathcal{T}}(\mathcal{X}) \in \mathbb{R}^{K_1 \times \cdots \times K_P \times J_1 \times \cdots \times J_M}
	\end{equation}
	where each preserved position $(j_1, \ldots, j_M)$ receives 
	predictions over $(K_1, \ldots, K_P)$ dimensions.
\end{definition}

\begin{definition}[Structure Preservation Index]
	\label{def:preservation_index}
	For a task $\mathcal{T}$ with preserved dimensions $M$ and input spatial/temporal dimensions $M_{\text{input}}$, the structure preservation index is:
	\begin{equation}
		\rho(\mathcal{T}) = \frac{M}{M_{\text{input}}} \in [0, 1]
	\end{equation}
	where:
	\begin{itemize}
		\item $\rho = 0$: complete contraction.
		\item $\rho = 1$: complete preservation.
		\item $0 < \rho < 1$: partial preservation.
	\end{itemize}
\end{definition}

\section{Task Unification Theorems}\label{sec:unif}

\subsection{Classification}

\begin{theorem}[Classification as Special Case]
	\label{thm:classification_recovery}
	Traditional image classification with $C$ classes is recovered by MTL with configuration:
	\begin{equation}
		\mathcal{T}_{\text{class}} = (P=1, M=1, \mathcal{L}_{\text{CE}}, \phi_{\text{argmax}})
	\end{equation}
	where $K_1 = C$, $J_1 = B$ (batch only), and $\mathcal{L}_{\text{CE}}$ is categorical cross-entropy.
\end{theorem}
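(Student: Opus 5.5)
The plan is to show the two formulations coincide, by instantiating the GE-MLP layer equation \eqref{eq:outputlayer} with the stated configuration and checking that each ingredient of standard classification appears. A traditional classifier takes a batch $X \in \mathbb{R}^{B\times D}$ (or, before flattening, $\mathbb{R}^{B\times H\times W\times C_{\text{in}}}$), applies affine maps $Z = XW^\top + \mathbf{1}b^\top$ with a final layer $W\in\mathbb{R}^{C\times D}$, and then applies softmax, cross-entropy and argmax. On the MTL side, take the input $\mathcal{X}\in\mathbb{R}^{I_1\times\cdots\times I_N\times J_1}$ with $J_1=B$, so $M=1$ and the contracted modes $I_1,\ldots,I_N$ are all non-batch axes (for an image, $N=3$ with $I_1,I_2,I_3 = H,W,C_{\text{in}}$). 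Choose $P=1$ and $K_1=C$.

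\textbf{Step 1 (forward map).} With this choice the weight is $\mathcal{W}\in\mathbb{R}^{C\times I_1\times\cdots\times I_N}$. Writing out the Einstein product \eqref{eq:einsteinprod} gives
$z_{c,b}=\sum_{i_1,\ldots,i_N} w_{c,i_1,\ldots,i_N}\,x_{i_1,\ldots,i_N,b}+b_{c,b}$.
Next introduce the canonical unfolding bijection $\mathrm{vec}:(i_1,\ldots,i_N)\mapsto i\in\{1,\ldots,D\}$ with $D=\prod_n I_n$. This bijection reindexes the multi-sum as a single sum, so $\mathcal{W}$ corresponds to a matrix $W\in\mathbb{R}^{C\times D}$ and $\mathcal{X}$ to $X^\top\in\mathbb{R}^{D\times B}$. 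The layer then becomes exactly $Z^\top = WX^\top + \mathcal{B}$, and the same identification applies to every hidden layer. Conversely, any matrix MLP lifts to a GE-MLP through the inverse reshape, which gives recovery in both directions.

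\textbf{Step 2 (bias).} This is the main obstacle. In MTL the bias is $\mathcal{B}\in\mathbb{R}^{C\times B}$, which in general depends on the batch index, whereas standard classification uses a single $b\in\mathbb{R}^C$ shared across samples. I would handle this by restricting to the subspace $b_{c,b}=\beta_c$, i.e.\ $\mathcal{B}=\beta\mathbf{1}_B^\top$, and showing it is invariant under GEGD. By \eqref{eq:bias} and the chain rule, if the loss is a sum over samples and the bias updates are aggregated over the batch axis, the updates to $\beta$ coincide with standard SGD on $b$. So the standard model is recovered as the sub-configuration with tied bias along the preserved batch mode. I would state this tying explicitly as part of what ``recovered'' means.

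\textbf{Step 3 (loss, decision and gradients).} Define $\mathcal{L}_{\text{CE}}(\mathcal{Y},\mathcal{Y}_{\text{True}})=-\frac1B\sum_{b}\sum_{c} y^{\text{true}}_{c,b}\log\operatorname{softmax}_c(z_{\cdot,b})$, where the softmax is taken along the contracted mode $K_1$ at each preserved position $b$. Define $\phi_{\text{argmax}}(\mathcal{Y})_b=\arg\max_c y_{c,b}$. Under the identification of Step 1 these are literally the standard categorical cross-entropy and predicted labels. Finally, verify that the GEGD gradient \eqref{eq:weight}, once summed over the preserved index $b$ as the chain rule requires, equals $\frac{\partial L}{\partial Z}^{\!\top} X$, which is the usual weight gradient. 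Training trajectories therefore coincide, and we can note $\rho=0$ relative to the spatial dimensions.
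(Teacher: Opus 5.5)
Your argument is correct and uses the same basic move as the paper's proof: instantiate the layer with $P=1$, $M=1$, $J_1=B$, $K_1=C$, write out the contraction, and attach softmax, cross-entropy and argmax. It differs from the paper's execution in three substantive ways.

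\textbf{Contraction.} The paper takes $N=1$ with $I_1=HWC_{\text{in}}$, so it feeds an already-flattened input and the identification with ``flatten $+$ linear'' is immediate. You keep $(I_1,I_2,I_3)=(H,W,C_{\text{in}})$ and use the unfolding bijection to show that the multi-mode Einstein contraction \emph{is} flattening followed by a matrix product. That is the real content of the recovery claim.

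\textbf{Bias.} The paper silently writes the bias as $b_c$, although the framework's bias is $\mathcal{B}\in\mathbb{R}^{C\times B}$. You correctly flag that tying it along the batch mode is an extra constraint needed for recovery. State its status precisely, though: the tied subspace is \emph{not} invariant under the rule \eqref{eq:bias} as written, because $\partial L/\partial b_{c,b}=\frac1B(\hat y_{c,b}-y^*_{c,b})$ varies with $b$. What you actually impose is parameter sharing, with $\partial L/\partial\beta_c=\sum_b \partial L/\partial b_{c,b}$.

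\textbf{Training.} You verify that training trajectories coincide, correctly summing \eqref{eq:weight} over the preserved index, a sum the paper's gradient formula omits. The paper only matches the forward pass, loss and decision rule.

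The one place you diverge from the paper in substance is your closing remark $\rho=0$. With $M=1$, Definition~\ref{def:preservation_index} gives $\rho=1/M_{\text{input}}>0$, and the paper counts $(B,H,W)$ to obtain $\rho=1/3$, as also listed in Theorem~\ref{thm:task_unification}. Since $\rho$ is not part of the theorem statement this does not affect the proof, but you should use the paper's value.
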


\begin{proof}
	Given input $\mathcal{X} \in \mathbb{R}^{B \times H \times W \times C_{\text{in}}}$, 
	we configure the GE-MLP with $N=1$ (contracting spatial dimensions: $I_1 = HWC_{\text{in}}$), 
	$M=1$ (preserving batch: $J_1=B$), and $P=1$ (output classes: $K_1=C$). 
	The forward pass computes:
	\begin{equation}
		y_{c,b} = \sum_{i=1}^{HWC_{\text{in}}} w_{c,i} \cdot x_{i,b} + b_c
	\end{equation}
	Applying softmax yields $\hat{y}_{c,b} = \exp(y_{c,b})/\sum_{c'} \exp(y_{c',b})$, 
	with categorical cross-entropy loss 
	$$\mathcal{L}_{\text{CE}} = -\frac{1}{B}\sum_b \sum_c y^*_{c,b} \log(\hat{y}_{c,b})$$ 
	where $y^*_{c,b} \in \{0,1\}$ is one-hot encoded. 
	Prediction uses $\phi_{\text{argmax}}(\hat{y}_b) = \arg\max_c \hat{y}_{c,b}$. 
	This formulation exactly recovers standard classification 
	$(B,H,W,C_{\text{in}}) \xrightarrow{\text{flatten}} (B,D) \xrightarrow{\text{linear+softmax}} (B,C)$ 
	with structure preservation index $\rho = 1/3$ since only batch is preserved from $(B,H,W)$.
\end{proof}

\begin{theorem}[Dense Classification Extension]
	\label{thm:dense_classification}
	Dense classification (classification at each spatial position) is recovered by MTL with configuration:
	\begin{equation}
		\mathcal{T}_{\text{dense}} = (P=1, M=3, \mathcal{L}_{\text{CE-dense}}, \phi_{\text{argmax}})
	\end{equation}
	where $K_1 = C$, $\mathbf{J} = (B, H, W)$.
\end{theorem}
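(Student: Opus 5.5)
The argument mirrors the proof of Theorem~\ref{thm:classification_recovery}, but with a different split of the axes: only the channel axis is contracted, and all of $(B,H,W)$ are preserved. Starting from $\mathcal{X}\in\mathbb{R}^{B\times H\times W\times C_{\text{in}}}$, I will first permute modes so that the contracting axis comes first, giving $\mathcal{X}'\in\mathbb{R}^{C_{\text{in}}\times B\times H\times W}$. This is a relabelling of indices only, so no information is lost. The GE-MLP is then configured with $N=1$ and $I_1=C_{\text{in}}$, with $M=3$ and $\mathbf{J}=(B,H,W)$, and with $P=1$ and $K_1=C$. The weight is $\mathcal{W}\in\mathbb{R}^{C\times C_{\text{in}}}$ and the bias is $\mathcal{B}\in\mathbb{R}^{C\times B\times H\times W}$.

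By \eqref{eq:outputlayer}, the forward pass computes
\begin{equation*}
y_{c,b,h,w}=\sum_{i=1}^{C_{\text{in}}} w_{c,i}\,x_{i,b,h,w}+b_{c,b,h,w}.
\end{equation*}
Standard dense heads use a position-independent bias, so I will take $b_{c,b,h,w}=b_c$. Tying the bias this way is a restriction of the bias tensor, so it stays inside the MTL configuration.

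The key observation is that, for fixed $(b,h,w)$, this expression is exactly a $1\times1$ convolution, i.e.\ a linear layer shared across pixels. I will make this precise by matricizing: regroup $(b,h,w)$ into a single index of length $BHW$. The Einstein product $\mathcal{W}\ast_1\mathcal{X}'$ then becomes the matrix product $W X$ with $X\in\mathbb{R}^{C_{\text{in}}\times BHW}$. This recovers the conventional pipeline
\begin{equation*}
(B,H,W,C_{\text{in}})\to(B,H,W,C).
\end{equation*}

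Next I apply a softmax over $c$ independently at each position, which gives $\hat y_{c,b,h,w}$. The dense cross-entropy is
\begin{equation*}
\mathcal{L}_{\text{CE-dense}}=-\frac{1}{BHW}\sum_{b,h,w}\sum_c y^*_{c,b,h,w}\log\hat y_{c,b,h,w},
\end{equation*}
which is the per-pixel average of the loss $\mathcal{L}_{\text{CE}}$ from Theorem~\ref{thm:classification_recovery}. Prediction uses $\phi_{\text{argmax}}$ pointwise, giving the label map $\arg\max_c\hat y_{c,b,h,w}$. For the gradients, the GEGD rule sums over all preserved positions, so the $\partial L/\partial w_{c,i}$ obtained from \eqref{eq:weight} coincides with the weight-shared convolution gradient.

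The main obstacle is justifying the claim that the classical model is \emph{exactly} recovered. This requires addressing two mismatches. The first is mode ordering: the paper's layout places contracting axes first, while the data comes as $(B,H,W,C_{\text{in}})$. I will resolve this with the explicit permutation above. The second is the bias, where the general $\mathcal{B}$ carries preserved indices. I will resolve this by the weight tying $b_{c,b,h,w}=b_c$, and note that untied biases give a strictly more expressive variant.

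Finally, since all input spatial and batch dimensions are preserved, Definition~\ref{def:preservation_index} gives $\rho=3/3=1$. This contrasts with $\rho=1/3$ for standard classification.
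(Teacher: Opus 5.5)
Your proposal is correct and follows the same route as the paper's proof. Both use the same configuration ($N=1$, $I_1=C_{\text{in}}$, $\mathbf{J}=(B,H,W)$, $K_1=C$), the same forward pass with a shared bias $b_c$, per-pixel softmax, dense cross-entropy, pointwise $\arg\max$, and $\rho=3/3=1$. Your additions tidy up points the paper passes over silently: the explicit mode permutation putting $C_{\text{in}}$ first, treating $b_c$ as a tying of the general bias tensor $\mathcal{B}\in\mathbb{R}^{C\times B\times H\times W}$, and identifying the layer precisely as a $1\times 1$ convolution via matricization (the paper writes $x_{b,i,h,w}$ and $b_c$ without comment and appeals loosely to fully convolutional networks).
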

\begin{proof}
	For input $\mathcal{X} \in \mathbb{R}^{B \times H \times W \times C_{\text{in}}}$ 
	targeting per-pixel classification $y_{b,h,w} \in \{1,\ldots,C\}$, 
	we configure the GE-MLP to contract only channel dimensions ($N=1$, $I_1=C_{\text{in}}$) 
	while preserving full spatial structure ($M=3$, $\mathbf{J}=(B,H,W)$) 
	with $C$ class outputs ($P=1$, $K_1=C$). 
	For each spatial position $(b,h,w)$, the forward pass computes
	\begin{equation}
		y_{c,b,h,w} = \sum_{i=1}^{C_{\text{in}}} w_{c,i} \cdot x_{b,i,h,w} + b_c
	\end{equation}
	where bias $b_c$ is shared across all spatial locations. 
	Applying per-pixel softmax 
	$$\hat{y}_{c,b,h,w} = \exp(y_{c,b,h,w})/\sum_{c'} \exp(y_{c',b,h,w})$$
	with dense categorical cross-entropy 
	$\mathcal{L}_{\text{CE-dense}} = -\frac{1}{BHW}\sum_{b,h,w,c} y^*_{c,b,h,w} \log(\hat{y}_{c,b,h,w})$ 
	yields per-pixel class predictions. 
	This formulation is mathematically equivalent to fully convolutional networks 
	performing dense classification, with complete structure preservation $\rho = 3/3 = 1$.
\end{proof}
\subsection{Segmentation}

\begin{theorem}[Segmentation Recovery]
	\label{thm:segmentation_recovery}
	Traditional semantic segmentation is recovered by MTL with configuration:
	\begin{equation}
		\mathcal{T}_{\text{seg}} = (P=1, M=3, \mathcal{L}_{\text{CE-seg}}, \phi_{\text{argmax}})
	\end{equation}
	identical to dense classification (Theorem \ref{thm:dense_classification}).
\end{theorem}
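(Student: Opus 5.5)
The plan is to reduce Theorem~\ref{thm:segmentation_recovery} to Theorem~\ref{thm:dense_classification}. The argument rests on the observation that semantic segmentation, viewed at the level of its prediction head, \emph{is} dense classification: it assigns one of $C$ labels to every pixel $(b,h,w)$. So I will first fix the standard formulation of semantic segmentation. A backbone (possibly an encoder--decoder) produces a feature tensor $\mathcal{X} \in \mathbb{R}^{B \times H \times W \times C_{\text{in}}}$ at the target resolution. This is followed by a $1\times 1$ convolution mapping $C_{\text{in}}$ channels to $C$ class logits, a per-pixel softmax, and pixel-averaged cross-entropy $\mathcal{L}_{\text{CE-seg}}$. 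The ground truth is a label map $y^*_{b,h,w} \in \{1,\ldots,C\}$, one-hot encoded as $y^*_{c,b,h,w}$.

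Next I will instantiate the GE-MLP with $N=1$, $I_1 = C_{\text{in}}$, $M=3$, $\mathbf{J}=(B,H,W)$, $P=1$, $K_1=C$, exactly as in the proof of Theorem~\ref{thm:dense_classification}. I will then verify entrywise, using \eqref{eq:outputlayer}, that
\[
y_{c,b,h,w} = \sum_{i=1}^{C_{\text{in}}} w_{c,i}\, x_{i,b,h,w} + b_c
\]
coincides with the output of a $1\times1$ convolution whose kernel is $w_{c,i}$ and whose bias is $b_c$. A $1\times 1$ convolution is precisely a channel contraction applied identically at each spatial position. One detail needs attention here. The general GE-MLP bias lives in $\mathbb{R}^{K_1\times J_1\times J_2\times J_3}$, so recovering the standard model means restricting it to be constant along $\mathbf{J}$. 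I will note that this is a subspace of the GE-MLP parameter space, so the standard model is still contained in the configuration; the unrestricted bias only enlarges the hypothesis class. I will then check that the loss matches, $\mathcal{L}_{\text{CE-seg}} = \mathcal{L}_{\text{CE-dense}}$ (possibly with class weights or an ignore index folded into $\mathcal{L}$), and that decoding is per-pixel, $\phi_{\text{argmax}}$ over $c$. Finally I will compute $\rho = 3/3 = 1$.

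The main obstacle is the phrase ``traditional semantic segmentation''. Real segmentation networks, such as U-Net or DeepLab, are not a single channel contraction: they contain spatially mixing convolutions, downsampling and upsampling. My first approach is to claim recovery at the level of the \emph{task formulation}, that is, the output shape, the loss and the interpretation map, in line with the definition of $\mathcal{T}$, which involves only $(P,M,\mathcal{L},\phi)$ and not the backbone. Any feature extractor that outputs a tensor in $\mathbb{R}^{B\times H\times W\times C_{\text{in}}}$ can precede the head. To show the head is a GE-MLP, I will stack layers that each preserve $\mathbf{J}=(B,H,W)$ and contract channels.

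The distinction between segmentation and dense classification lies only in the semantics of the labels, not in the configuration tuple, and so the two configurations are identical as tuples. I will state this explicitly so that the word ``identical'' in the theorem is justified.
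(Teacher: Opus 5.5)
Your proposal is correct and takes the same route as the paper. The paper's proof observes that segmentation differs from dense classification only semantically (per-pixel labels with spatial coherence versus independent per-pixel labels), so the tuple and the argument of Theorem~\ref{thm:dense_classification} carry over unchanged. Your extra points are refinements the paper leaves implicit, not a different argument: the $1\times 1$-convolution head, the observation that the shared bias $b_c$ is a restriction of the general bias in $\mathbb{R}^{K_1\times J_1\times J_2\times J_3}$, and placing the spatially mixing backbone outside the tuple $(P,M,\mathcal{L},\phi)$.
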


\begin{proof}
	Semantic segmentation is mathematically identical to dense classification. The distinction is semantic (task purpose) rather than structural:
	\begin{itemize}
		\item Dense classification: assign category to each position independently.
		\item Segmentation: assign category to each position with spatial coherence.
	\end{itemize}
	Both use the same MTL configuration. The proof follows identically to Theorem \ref{thm:dense_classification}.
\end{proof}

\subsection{Detection}

\begin{theorem}[Detection Recovery]
	\label{thm:detection_recovery}
	Object detection with grid-based prediction (YOLO-style) is recovered by MTL with configuration:
	\begin{equation}
		\mathcal{T}_{\text{det}} = (P=3, M=3, \mathcal{L}_{\text{det}}, \phi_{\text{det}})
	\end{equation}
	where $(K_1, K_2, K_3) = (4, 1, C)$ represent bounding box coordinates, objectness, and class probabilities, and $\mathbf{J} = (B, G_h, G_w)$ is the detection grid.
\end{theorem}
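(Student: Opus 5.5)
We mirror the constructive argument used for Theorem \ref{thm:classification_recovery} and Theorem \ref{thm:dense_classification}: we exhibit explicit GE-MLP parameter shapes, write out the forward pass entrywise via \eqref{eq:outputlayer}, specify $\mathcal{L}_{\text{det}}$ and $\phi_{\text{det}}$, and check that they coincide with the standard YOLO-style head $(B,G_h,G_w,4+1+C)$.

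\textbf{Step 1 (input and contraction).} We start from a backbone feature map $\mathcal{X}\in\mathbb{R}^{B\times G_h\times G_w\times C_{\text{in}}}$, reindexed as $\mathbb{R}^{C_{\text{in}}\times B\times G_h\times G_w}$. We then take $N=1$ with $I_1=C_{\text{in}}$ and $M=3$ with $\mathbf{J}=(B,G_h,G_w)$. If the raw image has resolution $H\times W$, we first contract each $(H/G_h)\times(W/G_w)$ patch together with the channels into $I_1$, exactly as the spatial axes were absorbed into $I_1$ in the proof of Theorem \ref{thm:classification_recovery}.

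\textbf{Step 2 (three output modalities).} Here lies the main obstacle. A single weight tensor $\mathcal{W}\in\mathbb{R}^{4\times1\times C\times I_1}$ would produce an outer-product-shaped output of size $4C$ per cell, not the concatenation $4+1+C$. We resolve this by reading $P=3$ as three output modes realized by three parallel heads $\mathcal{W}^{(k)}\in\mathbb{R}^{K_k\times I_1}$ for $k=1,2,3$, with biases $\mathcal{B}^{(k)}$ shared over $\mathbf{J}$. Equivalently, we use a single block weight of size $(4+1+C)\times I_1$ whose rows are partitioned. Each head has the form
\begin{equation}
y^{(k)}_{c,b,g,g'}=f_k\Bigl(\sum_{i=1}^{C_{\text{in}}} w^{(k)}_{c,i}\,x_{i,b,g,g'}+b^{(k)}_c\Bigr),
\end{equation}
where $f_1$ is the identity or sigmoid used for box parametrization, $f_2$ is the sigmoid for objectness, and $f_3$ is the softmax over $c$. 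Stacking the three heads along the first mode reproduces the YOLO tensor $(B,G_h,G_w,4+1+C)$ exactly. This stacking is a $1\times1$ convolution, just as in Theorem \ref{thm:dense_classification}. Making this interpretation of $(K_1,K_2,K_3)$ explicit is the delicate point of the proof.

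\textbf{Step 3 (loss and decoding).} We define $\mathcal{L}_{\text{det}}$ as
\begin{equation}
\mathcal{L}_{\text{det}}=\lambda_{\text{box}}\sum_{b,g,g'}\mathbb{1}^{\text{obj}}_{b,g,g'}\,\ell_{\text{box}}+\sum_{b,g,g'}\ell_{\text{BCE}}(\hat y^{(2)},o^*)+\lambda_{\text{cls}}\sum_{b,g,g'}\mathbb{1}^{\text{obj}}_{b,g,g'}\,\ell_{\text{CE}}(\hat y^{(3)},y^*),
\end{equation}
where $\ell_{\text{box}}$ is a squared or IoU-type loss on the box coordinates and $\mathbb{1}^{\text{obj}}_{b,g,g'}$ indicates the cell responsible for an object.

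We define $\phi_{\text{det}}$ as follows: threshold the score $\hat y^{(2)}\cdot\max_c\hat y^{(3)}$, decode the box relative to the cell index $(g,g')$, take $\arg\max_c$ for the class label, and then apply non-maximum suppression.

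Since every term is a sum over the preserved positions $\mathbf{J}$ of per-cell quantities, it matches the YOLO loss term by term.

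\textbf{Step 4 (conclusion).} Matching shapes, forward pass, loss and decoding establishes the recovery claimed in the statement. The structure preservation index is $\rho=3/3=1$ by Definition \ref{def:preservation_index}, since all of $(B,G_h,G_w)$ is preserved. Anchors can be handled by adding a preserved axis $A$, which gives $M=4$, or by folding $A$ into each $K_k$; we remark on this without changing the configuration in the statement.
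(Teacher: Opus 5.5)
Your proposal is correct and follows the paper's proof essentially step for step. Both contract only the channels ($N=1$, $I_1=C_{\text{in}}$) while preserving the grid $\mathbf{J}=(B,G_h,G_w)$, use three parallel heads realized by separate weight tensors with position-shared biases, and use the masked multi-task loss (box and class terms on object cells, objectness on all cells), thresholding plus NMS for $\phi_{\text{det}}$, and $\rho=1$.

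Where you go beyond the paper is in making explicit two points its proof passes over silently. First, a single weight tensor with $P=3$ and $(K_1,K_2,K_3)=(4,1,C)$ would produce a $4\times 1\times C$ outer-product output per cell rather than the $4+1+C$ concatenation, so reading $P=3$ as \emph{separate heads} is a reinterpretation of $P$. Second, the input resolution $H\times W$ has to be reconciled with the grid $G_h\times G_w$.
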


\begin{proof}
	For YOLO-style detection on input $\mathcal{X} \in \mathbb{R}^{B \times H \times W \times C_{\text{in}}}$, 
	the GE-MLP employs three output modalities per grid cell: 
	bounding box coordinates ($K_1=4$: center $x,y$ and dimensions $w,h$), 
	objectness score ($K_2=1$), 
	and class probabilities ($K_3=C$). 
	We configure $N=1$ (contracting feature dimensions: $I_1=C_{\text{in}}$) 
	and $M=3$ (preserving grid structure: $\mathbf{J}=(B,G_h,G_w)$). 
	Each grid cell $(b,g_h,g_w)$ receives three parallel predictions via separate weight tensors. 
	For bounding box:
	\begin{equation}
		y^{\text{bbox}}_{d,b,g_h,g_w} = \sum_{i=1}^{C_{\text{in}}} w^{\text{bbox}}_{d,i} \cdot x_{b,i,g_h,g_w} + b^{\text{bbox}}_d
	\end{equation}
	with sigmoid applied to $(x,y)$ for cell-relative coordinates and exponential to $(w,h)$ for scale. 
	For objectness: 
	$y^{\text{obj}}_{b,g_h,g_w} = \sigma(\sum_i w^{\text{obj}}_i x_{b,i,g_h,g_w} + b^{\text{obj}})$. 
	For classes: 
	$\hat{y}^{\text{class}}_{c,b,g_h,g_w} = \text{softmax}(\sum_i w^{\text{class}}_{c,i} x_{b,i,g_h,g_w} + b^{\text{class}}_c)$. 
	The multi-task loss combines all modalities:
	\begin{equation}
		\mathcal{L}_{\text{det}} = \lambda_1\mathcal{L}_{\text{bbox}} + \lambda_2\mathcal{L}_{\text{obj}} + \lambda_3\mathcal{L}_{\text{class}}
	\end{equation}
	where $\mathcal{L}_{\text{bbox}}$ uses MSE for cells containing objects, 
	$\mathcal{L}_{\text{obj}}$ uses binary cross-entropy for all cells, 
	and $\mathcal{L}_{\text{class}}$ uses categorical cross-entropy for cells with objects. 
	Post-processing $\phi_{\text{det}}$ applies objectness thresholding, 
	converts cell-relative to absolute coordinates, and performs Non-Maximum Suppression. 
	This exactly recovers YOLO's grid-based multi-modal prediction 
	with complete structure preservation $\rho = 1$ as grid structure is fully maintained.
\end{proof}

\subsection{Unification}

\begin{theorem}[MTL Task Unification]
	\label{thm:task_unification}
	Traditional computer vision tasks share the same computational mechanism (GE-MLPs with the Einstein product) 
	and differ only in which dimensions are preserved $(M)$ or contracted $(P)$, as quantified by $\rho$. 
	In fact, these tasks correspond to specific tuples within the MTL task space	\begin{equation}\label{SPACEMTL}
		\mathcal{S}_{\text{MTL}} = \{(P, M, \mathcal{L}, \phi) : P, M \in \mathbb{N}\}
	\end{equation}
	given by:
	\begin{equation}
		\begin{aligned}
			\text{Classification} &\mapsto \mathcal{T}_{\text{class}} = (1, 1, \mathcal{L}_{\text{CE}}, \phi_{\text{argmax}}), \quad \rho=1/3 \\
			\text{Segmentation} &\mapsto \mathcal{T}_{\text{seg}} = (1, 3, \mathcal{L}_{\text{CE}}, \phi_{\text{argmax}}), \quad \rho=1 \\
			\text{Detection} &\mapsto \mathcal{T}_{\text{det}} = (3, 3, \mathcal{L}_{\text{det}}, \phi_{\text{det}}), \quad \rho=1
		\end{aligned}
	\end{equation}
\end{theorem}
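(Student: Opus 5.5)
This is essentially a collation of the three recovery theorems already proved, so I will argue in three steps. First, I will show that every task considered is computed by the same mechanism. Each of Theorems \ref{thm:classification_recovery}, \ref{thm:dense_classification}, \ref{thm:segmentation_recovery} and \ref{thm:detection_recovery} realizes its forward pass as an instance of the layer map
\[
\mathcal{Y}^{(\ell)} = f\bigl(\mathcal{W}^{(\ell)} \ast_N \mathcal{X}^{(\ell-1)} + \mathcal{B}^{(\ell)}\bigr),
\]
with the explicit entrywise form (\ref{eq:outputlayer}). I will point to the displayed formulas for $y_{c,b}$, $y_{c,b,h,w}$ and $y^{\text{bbox}}_{d,b,g_h,g_w}$ (and the objectness and class heads) in those proofs. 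Each is exactly (\ref{eq:einsteinprod}) for a suitable split of the input axes into a contracted block $I$ and a preserved block $J$, followed by an activation: softmax or sigmoid, with exponential on the $(w,h)$ box entries. The learning rule is GEGD in every case. So the only free data are the integers $N, P, M$, the mode sizes, and the choice of $(\mathcal{L},\phi)$.

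Second, I will show that each task is a point of $\mathcal{S}_{\text{MTL}}$ as defined in (\ref{SPACEMTL}). The construction is to read off the tuples:
\begin{itemize}
\item From Theorem \ref{thm:classification_recovery}: $(1,1,\mathcal{L}_{\text{CE}},\phi_{\text{argmax}})$.
\item From Theorems \ref{thm:dense_classification} and \ref{thm:segmentation_recovery}: $(1,3,\mathcal{L}_{\text{CE}},\phi_{\text{argmax}})$. Here $\mathcal{L}_{\text{CE-seg}}$ is categorical cross-entropy averaged over the preserved positions, which is the pixelwise instance of $\mathcal{L}_{\text{CE}}$.
\item From Theorem \ref{thm:detection_recovery}: $(3,3,\mathcal{L}_{\text{det}},\phi_{\text{det}})$.
\end{itemize}
Since $P,M\in\mathbb{N}$ in all three, each tuple lies in $\mathcal{S}_{\text{MTL}}$. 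I will then compute $\rho$ from Definition \ref{def:preservation_index} with $M_{\text{input}}=3$, counting the structural axes $(B,H,W)$ of the input $\mathbb{R}^{B\times H\times W\times C_{\text{in}}}$. This gives $\rho=1/3$ for classification and $\rho=3/3=1$ for segmentation and detection. For detection I will note that the grid $(B,G_h,G_w)$ plays the role of the three structural axes.

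Third, I will make ``differ only in which dimensions are preserved or contracted'' precise by comparing the tuples pairwise.
\begin{itemize}
\item Classification versus segmentation: $P$ is equal, so only $M$ differs, namely whether $H,W$ are placed in the contracted block $I_1=HWC_{\text{in}}$ or in the preserved block $\mathbf{J}$. Correspondingly $\rho$ differs. The loss and $\phi$ are the same cross-entropy/argmax family, applied per preserved index.
\item Segmentation versus detection: $M$ and $\rho$ are equal, and only $P$ differs, from $1$ to $3$ output modes $(4,1,C)$.
\end{itemize}
The loss and $\phi$ for detection are then determined modewise by the output modes: a sum of per-mode losses and per-mode interpretation. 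So they are not independent design degrees of freedom beyond the choice of $P$ and the $K_p$.

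The main obstacle is this last point. Strictly, $\mathcal{L}$ and $\phi$ do differ between the tasks, so ``differ only in $(P,M)$'' needs a convention. I plan to handle it by arguing that within the framework $\mathcal{L}$ and $\phi$ are induced by the output shape: cross-entropy and argmax on a class mode, BCE and thresholding on a scalar mode, MSE on a coordinate mode, each summed or applied over the preserved indices. With that convention, the tuple is a function of $(P,M,\mathbf{K})$ together with the semantic type of each mode, and the theorem follows.
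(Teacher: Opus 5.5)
Your Steps 1 and 2 are the paper's argument: its proof just cites Theorems \ref{thm:classification_recovery}--\ref{thm:detection_recovery}. Given those results, reading off the tuples, checking $P,M\in\mathbb{N}$ and computing $\rho$ with $M_{\text{input}}=3$ is fine. The trouble is Step 3, where you go further and try to \emph{derive} ``differ only in $(P,M)$'' from the mechanism of Step 1. For detection this does not go through.

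If you read $P$ off the layer map, a single GE-MLP layer with $P=3$ and $(K_1,K_2,K_3)=(4,1,C)$ has $\mathcal{W}\in\mathbb{R}^{4\times 1\times C\times C_{\text{in}}}$. It returns a $4\times 1\times C$ block per grid cell, i.e.\ $4C$ numbers indexed jointly by (coordinate, class). That is a tensor product of the modalities, whereas YOLO needs $4+1+C$ separate numbers, a direct sum. The proof of Theorem \ref{thm:detection_recovery} in fact uses three separate weight tensors of sizes $4\times C_{\text{in}}$, $1\times C_{\text{in}}$ and $C\times C_{\text{in}}$, each a $P=1$ instance. Stacked, they form one $P=1$ layer with $K_1=C+5$ and a slice-wise activation. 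At the level of the mechanism, detection therefore has $(P,M)=(1,3)$, the same as segmentation, and the two differ in $K_1$ and in $(\mathcal{L},\phi)$, not in $P$. Your sentence ``only $P$ differs, from $1$ to $3$ output modes'' holds only if $P$ is reinterpreted as the number of parallel output heads, which is not what $P$ means in the GE-MLP layer. The inconsistency is inherited from Theorem \ref{thm:detection_recovery}, but because your argument is phrased at the mechanism level, you must state which reading of $P$ you use.

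The second gap is your resolution of the ``main obstacle''. The convention that $\mathcal{L}$ and $\phi$ are induced modewise by the output shape fails for detection:
\begin{itemize}
\item $\mathcal{L}_{\text{det}}$ carries free weights $\lambda_1,\lambda_2,\lambda_3$, and it masks the box and class terms by object presence, which couples the modes.
\item $\phi_{\text{det}}$ includes cell-to-absolute decoding and non-maximum suppression, which acts jointly across grid cells and across the box and score modes.
\end{itemize}
These are genuine design choices not determined by $(P,M,\mathbf{K})$ and mode types, so your claim that they ``are not independent design degrees of freedom'' does not hold. What your argument actually establishes is weaker: all three tasks are built from the same Einstein-product layer and sit at the listed points of $\mathcal{S}_{\text{MTL}}$ with the stated $\rho$. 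The word ``only'' must either be weakened to include $\mathbf{K}$ and the task-specific parts of $(\mathcal{L},\phi)$, or be taken as a definitional reading of the tuple. The latter is in effect what the paper's one-line proof does by assertion.
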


\begin{proof}
	Follows from Theorems \ref{thm:classification_recovery}--\ref{thm:detection_recovery}. 
	All tuples $\mathcal{T} \in \mathcal{S}_{\text{MTL}}$ share the GE-MLP mechanism, 
	differing only in $(P,M)$ configuration. 
	Unexplored tuples such as $(2, 3, \mathcal{L}, \phi)$ enable novel task formulations.
\end{proof}

\noindent Table~\ref{tab:mtl_summary} summarizes the MTL theoretical results.

\begin{table}[H]
	\caption{Summary of MTL Task Configurations}
	\label{tab:mtl_summary}
	\centering
	\footnotesize
	\begin{tabular}{lccccc}
		\hline
		Task & $P$ & $M$ & $\mathbf{K}$ & $\mathbf{J}$ & $\rho$ \\
		\hline
		Classification & 1 & 1 & $C$ & $B$ & 0.33 \\
		Dense Classification & 1 & 3 & $C$ & $(B,H,W)$ & 1.0 \\
		Segmentation & 1 & 3 & $C$ & $(B,H,W)$ & 1.0 \\
		Detection & 3 & 3 & $(4,1,C)$ & $(B,G_h,G_w)$ & 1.0 \\
		\hline
	\end{tabular}
\end{table}
\section{Discussion}\label{sec:dissc}
\noindent Our analysis shows that standard computer vision tasks can be expressed within a single unified formulation based on GEMLPs operating through the Einstein product. Within this view, tasks traditionally considered (classification, detection, segmentation) emerge as specific cases of MTL defined by the tuple $(P, M, \mathcal{L}, \phi)$.\\
\noindent This formulation naturally preserves the multidimensional structure of visual data, ensuring that spatial, spectral, and cross-modal relationships are maintained throughout processing. The resulting framework not only provides structural coherence but also introduces a principled mechanism for controlling computational complexity by selectively choosing which modes to preserve or contract.\\
\noindent An important implication of this perspective is that the space of valid task configurations is significantly larger than what is currently explored in the literature. In fact, By  exploiting the tuple $(P, M, \mathcal{L}, \phi)$, our framework introduces entirely new task formulations. For example, the task space $\mathcal{S}_{\text{task}}$ includes configurations such as:	\begin{itemize}
	\item $(P=1, M=2)$: Temporal classification  
	\item $(P=2, M=2)$: Spatiotemporal hierarchical prediction
	\item $(P=1, M=4)$: 3D volume segmentation  
	\item $(P=4, M=4)$: 4D spatiotemporal detection
	\item $(P=2, M=1)$: Multi-modal global prediction
\end{itemize}
\noindent These examples illustrate that the proposed formulation not only unifies existing tasks but also expands the  limit of what constitutes a valid computer vision problem, enabling configurations that are difficult—or impossible—to express using classical architectures.\\
\noindent Overall, the proposed formulation benefits from a clean and rigorous mathematical foundation based on tensor algebra, offering both conceptual clarity and a powerful tool for systematically extending computer vision tasks. 
A fundamental insight of this framework is that conventional architectures, by relying on matrix-valued weights and vector-valued biases, implicitly constrain the space of expressible tasks. Any task requiring the simultaneous preservation of multiple structural dimensions—such as spatiotemporal or cross-modal configurations—must resort to destructive flattening under standard formulations, collapsing structural information that cannot be recovered. GE-MLPs, by operating natively with tensor-valued parameters via the Einstein product, eliminate this structural bottleneck. The task space  $\mathcal{S}_{MTL}$
  is therefore strictly larger than the space of tasks naturally expressible through matrix-based layers, and the configurations listed above (e.g., spatiotemporal hierarchical prediction, 4D detection) are concrete instances that lie outside the reach of conventional formulations without structural loss.

\section{Conclusion}\label{sec:conc}

We introduced MTL, a formal mathematical framework that situates computer vision tasks within a unified tensor structured task space. We proved that classification, segmentation, and detection correspond to specific dimensional configurations within $\mathcal{S}_{MTL}$, and showed that this space strictly contains the set of tasks expressible under conventional matrix-based formulations. This framework provides a principled lens for understanding existing tasks and a formal basis for exploring new ones 

\bibliographystyle{IEEEtran}

\end{document}